\newtheorem{definition}{Definition}
\newtheorem{proposition}{Proposition}
\newcommand{\isometrypursuit}{{\sc IsometryPursuit}}
\newcommand{\brute}{{\sc BruteSearch}}
\newcommand{\greedy}{{\sc GreedySearch}}
\newcommand{\tsip}{{\sc TwoStageIsometryPursuit}}
\newenvironment{proof}{\paragraph{Proof:}}{\hfill$\square$}
\newenvironment{acknowledgments}{\section*{Acknowledgments}}{}
\newcommand{\M}{\mathcal{M}}
\newcommand{\N}{\mathcal{N}}
\title{Isometry pursuit}
\author{%
  Samson Koelle \\
  Amazon  \\
  koelle@amazon.com
  \And
  Marina Meila \\
  Department of Statistics\\
  University of Washington \\
  mmp@uw.edu
}
\begin{document}

\maketitle

\begin{abstract}
Isometry pursuit is a convex algorithm for identifying orthonormal column-submatrices of wide matrices.
It consists of a novel normalization method followed by multitask basis pursuit.
Applied to Jacobians of putative coordinate functions, it helps identity isometric embeddings from within interpretable dictionaries.
We provide theoretical and experimental results justifying this method.
For problems involving coordinate selection and diversification, it offers a synergistic alternative to greedy and brute force search.
\end{abstract}

\footnotetext[1]{Work conducted outside of Amazon.}

\section{Introduction}
\label{sec:introduction}

Many real-world problems may be abstracted as selecting a subset of the columns of a matrix representing stochastic observations or analytically exact data.
This paper focuses on a simple such problem that appears in interpretable learning and diversification.
Given a rank $D$ matrix $ X \in \mathbb R^{D \times P}$ with $P > D$, select a square submatrix $ X_{. S}$ where subset $ S \subset P$ satisfies $| S| = D$ that is as orthonormal as possible.

This problem arises in interpretable learning specifically because while the coordinate functions of a given feature space may have no intrinsic meaning, it is sometimes possible to generate a dictionary of interpretable features which may be considered as potential parametrizing coordinates.
When this is the case, selection of candidate interpretable features as coordinates can take the above form.
While implementations vary across data and algorithmic domains, identification of such coordinates generally aids mechanistic understanding, generative control, and statistical efficiency.

This paper shows that an adapted version of the algorithm in \citet{Koelle2024-no} leads to a convex procedure that can improve upon greedy approaches such as those in \citet{5895106, NEURIPS2019_6a10bbd4, Kohli2021-lr, Jones2007-uc} for finding isometries.
The insight leading to isometry pursuit is that multitask basis pursuit applied to an appropriately normalized $ X$ selects orthonormal submatrices.
Given vectors in $\mathbb R^D$, the normalization log-symmetrizes length and favors those closer to unit length, while basis pursuit favors those which are orthogonal.
Our results formalize this intuition within a limited setting, and show the usefulness of isometry pursuit as a trimming procedure prior to brute force search for diversification and interpretable coordinate selection.
We also introduce a novel ground truth objective function against which we measure the success of our algorithm, and discuss the reasonableness of the trimming procedure.

 \footnotetext[2]{Code is available at \url{https://github.com/sjkoelle/isometry-pursuit}.}
\section{Background}

Our algorithm is motivated by spectral and convex analysis.

\subsection{Problem}

Our goal is, given a matrix $ X \in \mathbb R^{D \times P}$, to select a subset $ S \subset [P]$ with $| S| = D$ such that $X_{.  S}$ is as orthonormal as possible in a computationally efficient way.
To this end, we define a ground truth loss function that measures orthonormalness, and then introduce a surrogate loss function that convexifies the problem so that it may be efficiently solved.

\subsection{Interpretability and isometry}

Our motivating example is the selection of data representations from within sets of putative coordinates: the columns of a provided wide matrix.
Compared with Sparse PCA \citep{Dey2017-mx, Bertsimas2022-qo, Bertsimas2022-dv}, we seek a low-dimensional representation from the set of these column vectors rather than their span.

This method applies to interpretability, for which parsimony is at a premium.
Interpretability arises through comparison of data with what is known to be important in the domain of the problem.
This knowledge often takes the form of a functional dictionary.
Evaluation of independence of dictionary features arises in numerous scenarios \citep{Chen2019-km, Koelle2022-ju, He2023-ch}.
The requirement that dictionary features be full rank has been called functional independence \citep{Koelle2022-ju} or feature decomposability \citep{templeton2024scaling}, with connection between dictionary rank and independence via the implicit function theorem.
Besides independence, the metric properties of such dictionary elements are of natural interest.
This is formalized through the notion of differential.

\begin{definition}
The \textbf{differential} of a smooth map $\phi:\mathcal M \to \mathcal N$ between $D$ dimensional manifolds $\M \subseteq \mathbb R^B$ and $\N \subseteq \mathbb R^P$ is a map in tangent bases $x_1 \dots x_{D}$ of $T_\xi \M$ and $y_1 \dots y_{D}$ of $T_{\phi(\xi)} \N$ consisting of entries
\begin{align}
\label{eq:diff}
    D\phi (\xi) = \begin{bmatrix}
    \frac{\partial \phi_1  }{\partial x_1}(\xi)  & \dots & \frac{\partial \phi_1 }{\partial x_D}(\xi)  \\
    \vdots & & \vdots \\
    \frac{\partial \phi_D }{\partial x_1}(\xi)  & \dots & \frac{\partial \phi_{D}  }{\partial x_{D}}(\xi) 
    \end{bmatrix}.
\end{align}
\end{definition}

It is not always necessary to explicitly estimate tangent spaces when applying this definition.
The most commonly encountered manifolds are vector spaces for which the tangent spaces are trivial.
This is the case for full-rank tabular data, for which isometry has a natural interpretation as a type of diversification, and often for the latent spaces of deep learning models.
In this case, $B = D$.

\begin{definition}
\label{def:isometric_at_a_point}
A map $\phi$ between $D$ dimensional submanifolds with inherited Euclidean metric $\mathcal M \subseteq R^{B}$ and $\mathcal N  \subseteq R^{P}$
$\phi$ is an \textbf{isometry at a point} $\xi \in \mathcal M$ if
\begin{align}
{D \phi (\xi)}^T D \phi (\xi) = I_D.
\end{align}
That is, $\phi$ is an isometry at $\xi$ if $D \phi (\xi)$ is orthonormal.
\end{definition}

The applications of pointwise isometry are themselves manifold.
Pointwise isometric embeddings faithfully preserve high-dimensional geometry.
For example, Local Tangent Space Alignment \citep{ZhangZ:04}, Multidimensional Scaling \citep{ChenBuja:localMDS09} and Isomap \citep{tenenbaum2000ggf} non-parametrically estimate embeddings that are as isometric as possible.
Another approach stitches together pointwise isometries selected from a dictionary to form global embeddings \citep{Kohli2021-lr}.
The method is particularly relevant since it constructs such isometries through greedy search, with putative dictionary features added one at a time.

That $D\phi$ is orthonormal has several equivalent formulations.
The one motivating our ground truth loss function comes from spectral analysis.
\begin{proposition}
\label{prop:orthonormal_spectrum}
The singular values $\sigma_1 \dots \sigma_D$ are equal to $1$ if and only if $U \in \mathbb{R}^{D \times D}$ is orthonormal.
\end{proposition}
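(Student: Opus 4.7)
The plan is to exploit the singular value decomposition directly, since the statement is essentially a restatement of the SVD characterization of orthogonal matrices for the square case. I would write $U = V \Sigma W^T$ where $V, W \in \mathbb{R}^{D \times D}$ are orthogonal and $\Sigma = \mathrm{diag}(\sigma_1, \dots, \sigma_D)$ with $\sigma_i \geq 0$, and then translate the two conditions (all singular values equal to one, and $U^T U = I_D$) into statements about $\Sigma$.

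For the forward direction, assuming $\sigma_1 = \dots = \sigma_D = 1$, I would observe that $\Sigma = I_D$, so that $U = V W^T$. Since the product of two orthogonal matrices is orthogonal, I get $U^T U = W V^T V W^T = W W^T = I_D$, which by Definition~\ref{def:isometric_at_a_point}'s usage is precisely what orthonormality means for a square matrix.

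For the reverse direction, I would start from $U^T U = I_D$ and compute $U^T U = W \Sigma V^T V \Sigma W^T = W \Sigma^2 W^T$. Setting this equal to $I_D$ and using that $W$ is orthogonal gives $\Sigma^2 = I_D$, and since singular values are nonnegative this forces $\sigma_i = 1$ for every $i$.

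The main subtlety, such as it is, will be ensuring the argument addresses the square case correctly: for rectangular matrices the implication only runs one way, but here squareness lets the condition $U^T U = I_D$ be equivalent to $U$ being orthogonal, so no separate handling of $U U^T$ is needed. There is no genuine obstacle beyond writing these manipulations carefully; the proof is essentially a direct application of SVD and the fact that squared singular values are eigenvalues of $U^T U$.
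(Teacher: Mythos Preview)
Your proposal is correct; the SVD argument you outline is the standard way to establish this equivalence and there is no gap.

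Note, however, that the paper does not actually supply its own proof of Proposition~\ref{prop:orthonormal_spectrum}: it is stated in the background section as a well-known spectral characterization and then used to justify the ground-truth loss, with the supplement's proofs section only covering Propositions~\ref{prop:basis_pursuit_selection_invariance} and~\ref{prop:unitary_selection}. So there is nothing to compare against beyond observing that your argument is the natural one the authors are implicitly relying on.
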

On the other hand, the formulation that motivates our convex approach is that orthonormal matrices consist of $D$ coordinate features whose gradients are orthogonal and of unit length.
\begin{proposition}
\label{prop:orthonormal_basis}
The component vectors $u_1 \dots u_D \in \mathbb R^B$ form a orthonormal matrix if and only if, for all $d_1, d_2 \in [D], \langle u_{d_1}, u_{d_2} \rangle = \begin{cases}
1 \; d_1 = d_2 \\ 
0 \; d_1 \neq d_2 
\end{cases}$.
\end{proposition}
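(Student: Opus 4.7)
The plan is to prove this by directly computing the Gram matrix $U^T U$ entrywise and matching it against the identity. This is essentially unpacking the definition of orthonormality, so I expect no real obstacle — the main thing is to be careful that ``orthonormal matrix'' here means $U^T U = I_D$, matching the convention established in Definition~\ref{def:isometric_at_a_point}.

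First I would assemble the matrix $U \in \mathbb{R}^{B \times D}$ whose $d$-th column is $u_d$. Then the $(d_1, d_2)$ entry of $U^T U$ is, by definition of matrix multiplication, the dot product of the $d_1$-th row of $U^T$ with the $d_2$-th column of $U$, namely $\langle u_{d_1}, u_{d_2} \rangle$. The identity matrix $I_D$ has $(d_1, d_2)$ entry equal to $1$ when $d_1 = d_2$ and $0$ otherwise.

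For the forward direction, if $U$ is orthonormal then $U^T U = I_D$, so matching entries gives the stated inner product conditions. For the reverse direction, if $\langle u_{d_1}, u_{d_2} \rangle$ takes the stated Kronecker-delta values for all $d_1, d_2 \in [D]$, then every entry of $U^T U$ agrees with the corresponding entry of $I_D$, whence $U^T U = I_D$ and $U$ is orthonormal. Since both directions reduce to the same entrywise identification, the proof is immediate and no inequality estimates or limiting arguments are needed.
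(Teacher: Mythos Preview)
Your proposal is correct; the paper itself offers no proof for this proposition, treating it as an immediate consequence of the definition of an orthonormal matrix, and your entrywise identification of $U^T U$ with $I_D$ is exactly the standard unpacking one would supply if pressed.
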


\subsection{Subset selection}

Given a matrix $ X \in \mathbb R^{D \times P}$, we compare algorithmic paradigms for solving problems of the form
\begin{align}
\label{prog:ground_truth}
\arg \min_{ S \in \binom{[P]}{D}} l ( X_{. S})
\end{align}
where $\binom{[P]}{D} = \left\{ A \subseteq [P] : \left|A\right| = D \right\}$.
Brute force algorithms consider all possible solutions.
These algorithms are conceptually simple, but have the often prohibitive time complexity $O(C_lP^D)$ where $C_l$ is the cost of evaluating $l$.
Greedy algorithms consist of iteratively adding one element at a time to $ S$.
This algorithms have time complexity $O(C_lPD)$ and so are computationally more efficient than brute force algorithms, but can get stuck in local minima.
Formal definitions are given in Section \ref{sec:algorithms}.

Sometimes, it is possible to introduce an objective which convexifies problems of the above form.
Solutions
\begin{align}
\arg \min f(\beta) : Y  = X\beta 
\end{align}
to the overcomplete regression problem $Y = X \beta$ are a classic example \citep{Chen2001-hh}.
When $f(\beta) = \|\beta\|_0$, this problem is non-convex, and is thus suitable for greedy or brute algorithms, but when $f(\beta) =\|\beta\|_1$, the problem is convex, and may be solved efficiently via interior-point methods.
When the equality constraint is relaxed, Lagrangian duality may be used to reformulate as a so-called Lasso problem, which leads to an even richer set of optimization algorithms. 

The form of basis pursuit that we apply is inspired by the group basis pursuit approach in \citet{Koelle2022-ju}.
In group basis pursuit (which we call multitask basis pursuit when grouping is dependent only on the structure of matrix-valued response variable $y$) the objective function is $f(\beta) = \|\beta\|_{1,2} := \sum_{p=1}^P \|\beta_{p.}\|_2$  \citep{Yuan2006-bt, Obozinski2006-kq, Yeung2011-fg}.
This objective creates joint sparsity across entire rows $\beta_{p.}$ and was used in \citet{Koelle2022-ju} to select between sets of interpretable features.
\section{Method}

We adapt the group lasso paradigm used to select independent dictionary elements in \citet{Koelle2022-ju, Koelle2024-no} to select pointwise isometries from a dictionary.
We first define a ground truth objective computable via brute and greedy algorithms that is uniquely minimized by orthonormal matrices.
We then define the combination of normalization and multitask basis pursuit that approximates this ground truth loss function.
We finally give a brute post-processing method for ensuring that the solution is $D$ sparse.

\subsection{Ground truth}
\label{sec:ground_truth}

We'd like a ground truth objective to be minimized uniquely by orthonormal matrices, invariant under rotation, and depend on all changes in the matrix.
Deformation \citep{Kohli2021-lr} and nuclear norm \citep{Boyd2004-ql} use only a subset of the differential's information and are not uniquely minimized at unitarity, respectively.
We therefore introduce an alternative ground truth objective that satisfies the above desiderata and has convenient connections to isometry pursuit.

This objective is
\begin{align}
l_{c}: \mathbb R^{D \times P} &\to \mathbb R^{+} \\
X &\mapsto \sum_{d = 1}^D g(\sigma_d( X), c)
\end{align}
where $\sigma_d ( X)$ is the $d$-th singular value of $ X$ and
\begin{align}
g: \mathbb R^+ \times \mathbb R^+ &\to \mathbb R^+ \\
t,c &\mapsto \frac{e^{t^c} + e^{t^{-c}}}{2e}.
\end{align}
Using Proposition \ref{prop:orthonormal_spectrum}, we can check that $l_c$ is uniquely maximized by orthonormal matrices.
Moreover, $g$ is convex, and $l_c( X^{-1}) = l_c( X)$ when $X$ is invertible.
Figure \ref{fig:losses} gives a graph of $l_c$ when $D=1$ and compares it with that produced by basis pursuit after normalization as in Section \ref{sec:normalization}.

Our ground truth program is therefore 
\begin{align}
\label{prog:ground_truth}
\arg \min_{ S \in \binom{[P]}{d}} l_c ( X_{. S}).
\end{align}
Regardless of the convexity of $l_c$, brute combinatorial search over $[P]$ is inherently non-convex.

\subsection{Normalization}
\label{sec:normalization}

Since basis pursuit methods tend to select longer vectors, selection of orthonormal submatrices requires normalization such that both long and short candidate basis vectors are penalized in the subsequent regression.
We introduce the following definition.

\begin{definition}[Symmetric normalization]
A function $q: \mathbb R^D \to \mathbb R^+ $ is a symmetric normalization if 
\begin{align}
\arg \max_{v \in \mathbb R^D} \ q (v) &=\{ v : \|v\|_2 = 1 \} \\
q(v) &= q(\frac{v}{\|v\|_2^2}) \\
q(v_1) &= q(v_2) \; \forall \; v_1, v_2 \in \mathbb R^D : \|v_1\|_2 = \|v_2\|_2.
\end{align} \label{def:symmetric_normalization}
\end{definition}

We use such functions to normalize vector length in such a way that vectors of length $1$ prior to normalization have longest length after normalization and vectors are shrunk proportionately to their deviation from $1$. 
That is, we normalize vectors by 
\begin{align}
n: \mathbb R^D  &\to \mathbb R^D \\
v &\mapsto {q(v) }v
\end{align}
and matrices by
\begin{align}
w: \mathbb R^{D \times P}  &\to \mathbb R^D \\
 X_{.p} &\mapsto n( X_{.p}) \; \forall \; p \in [P].
\end{align}

In particular, given $c > 0$, we choose $q$ as follows.
\begin{align}
q_c: \mathbb R^D  &\to \mathbb R^+ \\
v  &\mapsto \frac{e^{\|v\|_2^c} + e^{\|v\|_2^{-c}}}{2e}.
\label{eq:normalization}
\end{align}

Besides satisfying the conditions in Definition \ref{def:symmetric_normalization}, this normalization has some additional nice properties.
First, $q$ is convex.
Second, it grows asymptotically log-linearly.
Third, while $\exp(-|\log t|) = \exp(-\max (t, 1/t))$ is a seemingly natural choice for normalization, it is non smooth, and the LogSumExp \citep{Boyd2004-ql} replacement of $\max (t, 1/t)$ with $ \log (\exp (t ) + \exp(1/t))$ simplifies to \ref{eq:normalization} upon exponentiation.
Finally, the parameter $c$ grants control over the width of the basin, which may be useful for avoiding numerical issues arising close to $0$ and $\infty$.

\begin{figure}
\centering
\subcaptionbox{Ground truth loss \label{cat}}
{\includegraphics[width = .32\textwidth]{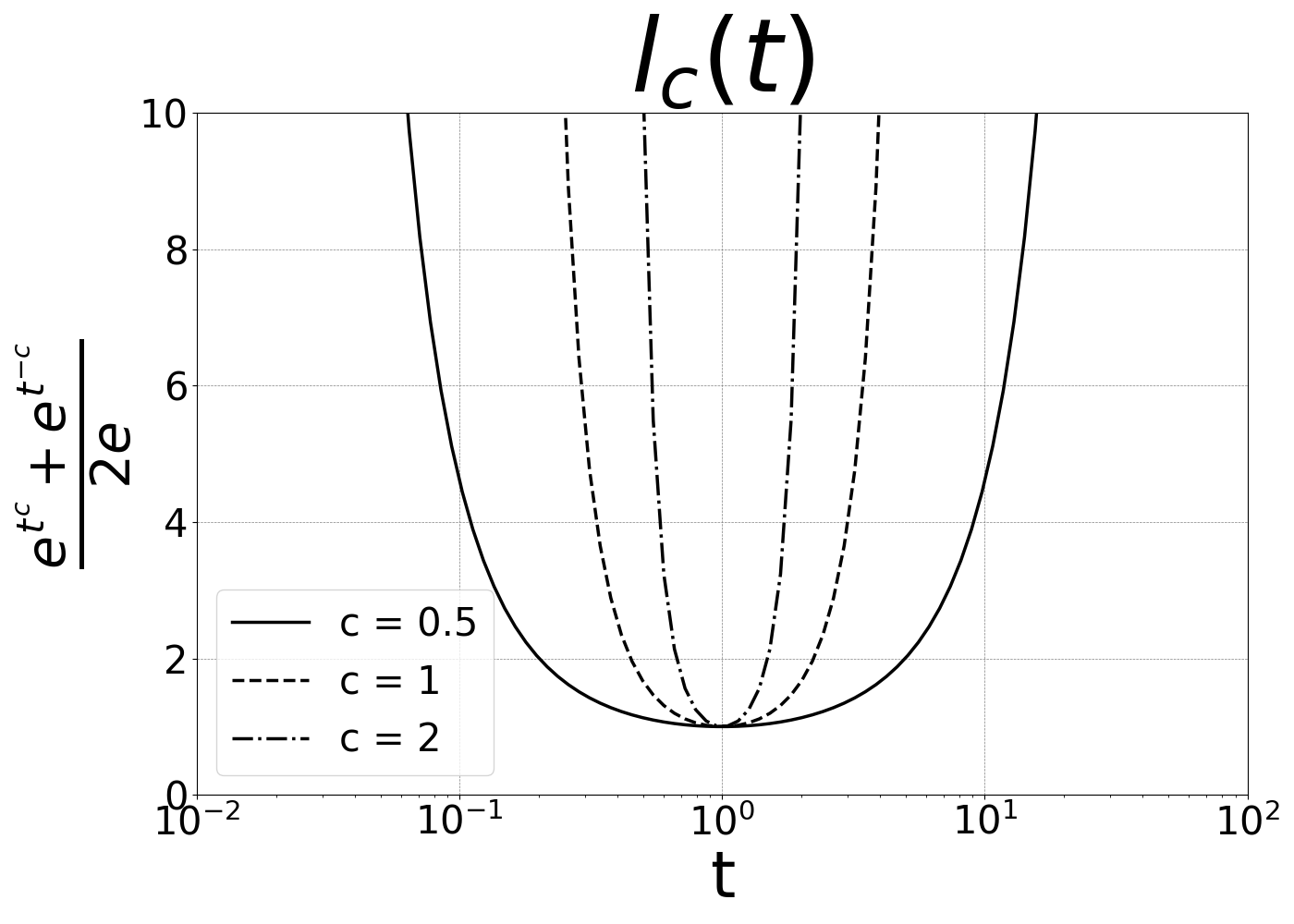}}
\label{fig:gt_loss}
\subcaptionbox{Normalized length as a function of unnormalized length \label{elephant}}
{\includegraphics[width = .32\textwidth]{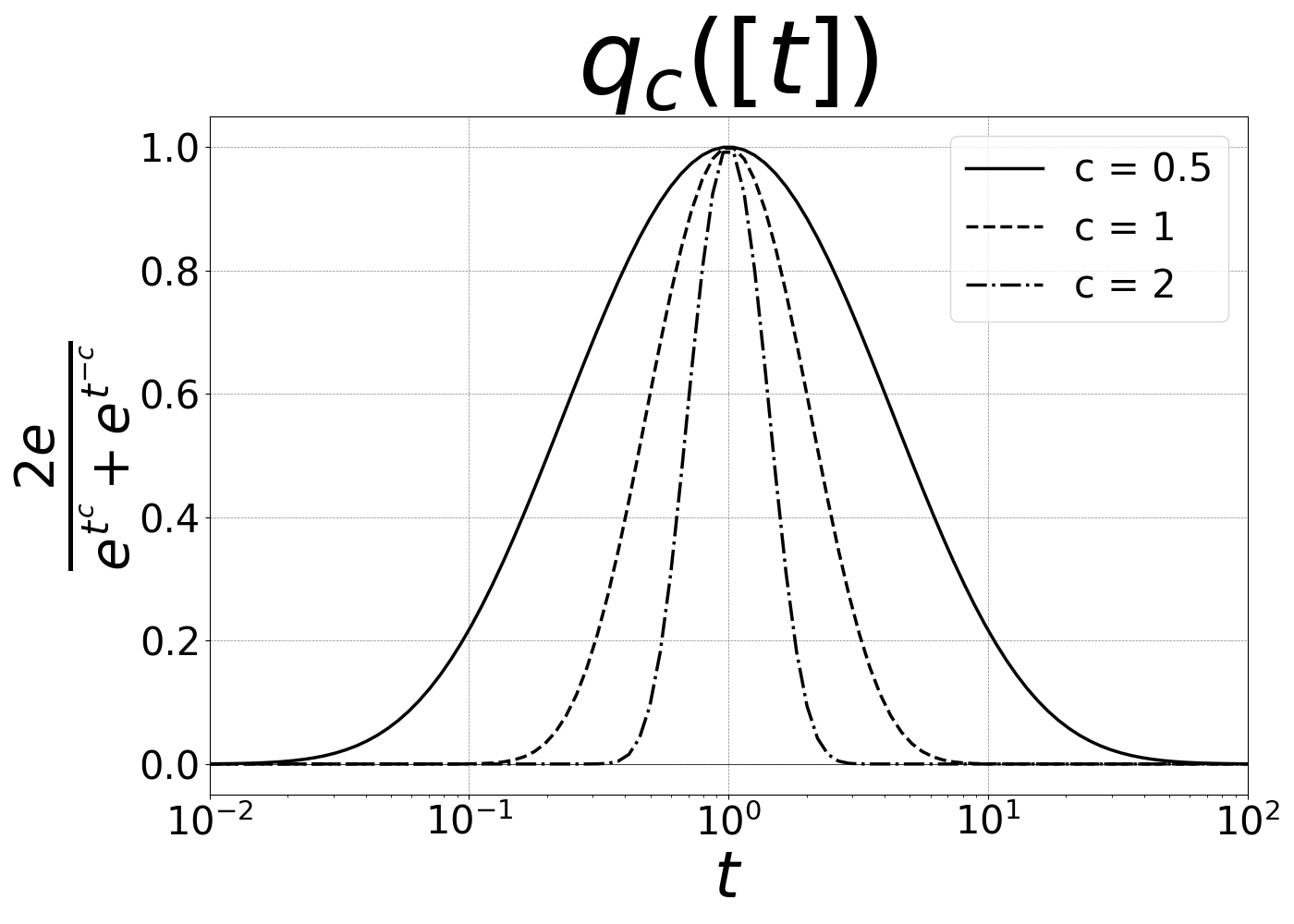}}
\subcaptionbox{Basis pursuit loss \label{snootfellow}}
{\includegraphics[width = .32\textwidth]{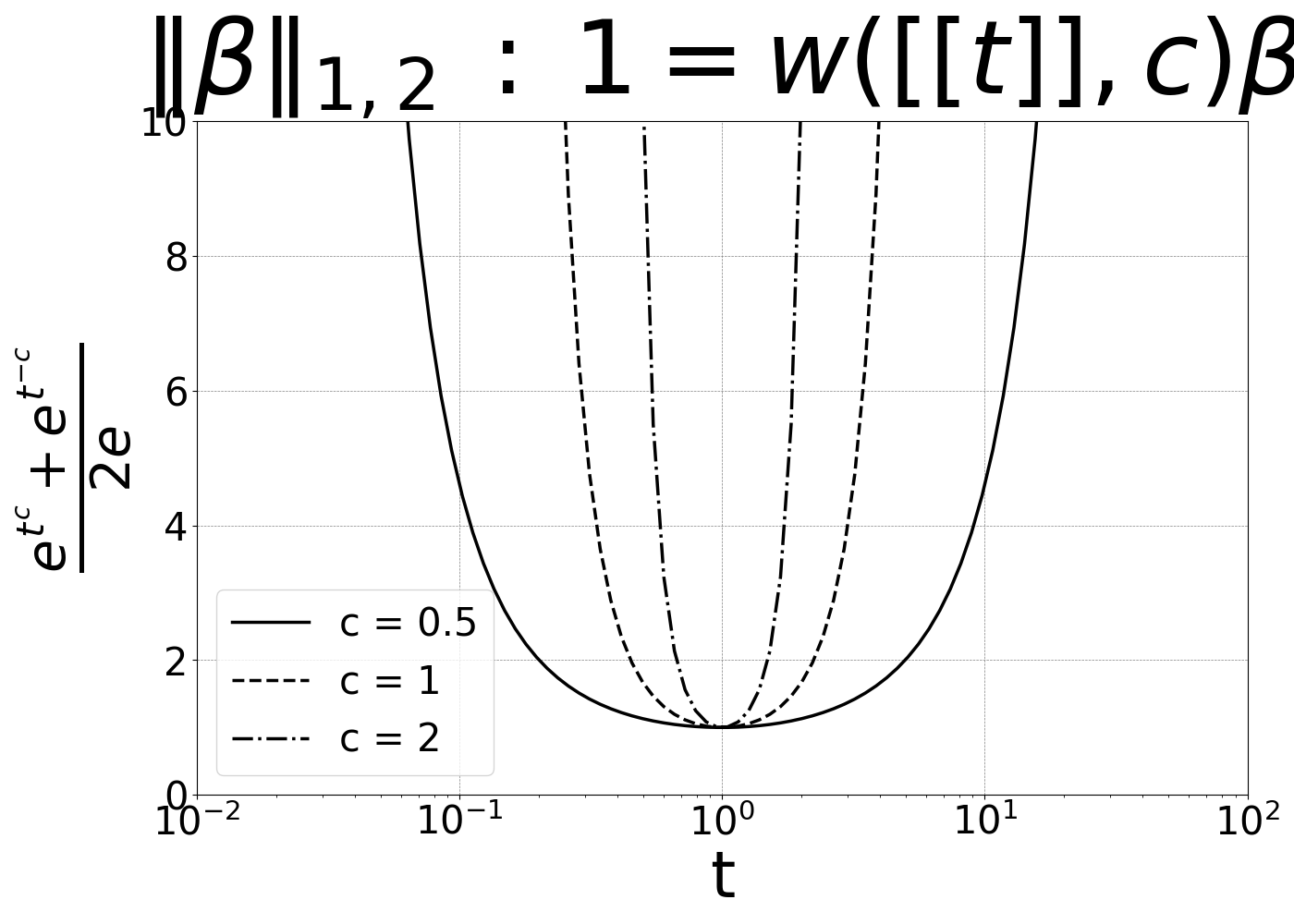}}
\caption{Plots of ground truth loss, normalized length, and basis pursuit loss for different values of $c$ in the one-dimensional case $D = 1$.
The two losses are equivalent in the one-dimensional case.}
\label{fig:losses}
\end{figure}

\subsection{Isometry pursuit}

Isometry pursuit is the application of multitask basis pursuit to the normalized design matrix $w(X, c)$ to identify submatrices of $ X$ that are as orthonormal as possible.
Define the multitask basis pursuit penalty 
\begin{align}
\label{eq:bp}
\| \cdot \|_{1,2}: \mathbb R^{P \times D} &\to \mathbb R^+ \\ 
\beta &\mapsto  \sum_{p=1}^P  \|\beta_{p.}\|_2.
\end{align}
Given a matrix $Y \in \mathbb R^{D \times D}$, the multitask basis pursuit solution is
\begin{align}
\label{prog:multitask_basis_pursuit}
\widehat \beta_{MBP} (X, Y)  := \arg \min_{\beta \in \mathbb R^{P \times D}} \| \beta \|_{1,2} \; : \;Y =  X \beta.
\end{align}
Isometry pursuit is then given by
\begin{align}
\label{prog:isometry_pursuit}
\widehat \beta_c ( X) := \widehat \beta_{MBP} ( w(X,c), I_D )
\end{align}
where $I_D$ is the $D$ dimensional identity matrix and recovered functions are the indices of the dictionary elements with non-zero coefficients.
That is, they are given by $S(\beta)$ where
\begin{align}
S: \mathbb{R}^{P \times D} &\to \binom{[P]}{D} \\
\beta &\mapsto \left\{ p \in [P] :  \|\beta_{p.}\| > 0 \right\}.
\end{align}
\begin{algorithm}[H]
\caption{\isometrypursuit(Matrix ${X} \in \mathbb{R}^{D \times P}$, scaling constant $c$)}
\begin{algorithmic}[1]
\STATE {Normalize} $X_c = w({X},c)$
\STATE {Optimize} $\widehat \beta = \widehat \beta_{MBP} (X_c, I_D)$
\STATE {\bf Output} $\widehat{S} = S (\widehat \beta)$
\end{algorithmic}
\end{algorithm}

\subsection{Theory}

The intuition behind our application of multitask basis pursuit is that submatrices consisting of vectors which are closer to 1 in length and more orthogonal will have smaller loss.
A key initial theoretical assertion is that  \isometrypursuit~ is invariant to choice of basis for $ X$.
\begin{proposition}
\label{prop:basis_pursuit_selection_invariance}
Let $U \in \mathbb R^{D \times D}$ be orthonormal.
Then $S(\widehat \beta  (U  X)) = S(\widehat \beta ( X))$.
\end{proposition}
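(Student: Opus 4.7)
The plan is to reduce the transformed problem to the original one via a change of variables that is a bijection on feasible sets, preserves the $(1,2)$ objective, and leaves the row-support unchanged.

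First I would observe that the column-wise normalization commutes with the action of $U$. For any column $X_{.p}$, $\|UX_{.p}\|_2 = \|X_{.p}\|_2$ since $U$ is orthonormal, so $q_c(UX_{.p}) = q_c(X_{.p})$ and hence
\begin{align}
w(UX,c) = U\,w(X,c).
\end{align}
Denote $X_c := w(X,c)$, so that isometry pursuit on $UX$ is multitask basis pursuit on $(UX_c, I_D)$.

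Next I would exhibit the bijection $\beta \mapsto \beta U^T$ between the feasible sets of $\widehat\beta_{MBP}(X_c, I_D)$ and $\widehat\beta_{MBP}(UX_c, I_D)$. If $X_c \beta = I_D$, then $UX_c(\beta U^T) = U I_D U^T = I_D$, and conversely (since $U$ is invertible). Moreover, since each row transforms as $(\beta U^T)_{p.} = \beta_{p.}\,U^T$ and $U^T$ is orthonormal,
\begin{align}
\|\beta U^T\|_{1,2} \;=\; \sum_{p=1}^P \|\beta_{p.}\,U^T\|_2 \;=\; \sum_{p=1}^P \|\beta_{p.}\|_2 \;=\; \|\beta\|_{1,2}.
\end{align}
Thus optimal values match and the map carries optimizers of the $X$-problem to optimizers of the $UX$-problem.

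Finally I would note that because $U^T$ is invertible, $\beta_{p.} = 0$ if and only if $\beta_{p.}\,U^T = 0$, so $S(\widehat\beta(X)) = S(\widehat\beta(X)\,U^T) = S(\widehat\beta(UX))$.

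The only subtle point is the possible non-uniqueness of the MBP optimizer; the argument above shows that the set of optimizers for the $UX$ problem is exactly $\{\widehat\beta U^T : \widehat\beta \text{ optimal for } X\}$, and since right-multiplication by $U^T$ preserves row-support, the support set $S(\cdot)$ is well-defined on this orbit. This is the step worth flagging, but it requires no additional machinery beyond the bijection already established.
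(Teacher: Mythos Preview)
Your proof is correct and follows essentially the same route as the paper: both arguments rest on the change of variables $\beta \mapsto \beta U^T$ (equivalently $\beta U^{-1}$), the invariance of $\|\cdot\|_{1,2}$ under right-multiplication by an orthonormal matrix, and the resulting bijection between feasible/optimal sets. Your version is in fact slightly more complete than the paper's, since you explicitly verify that the normalization $w(\cdot,c)$ commutes with $U$ and spell out why row-support is preserved and how non-uniqueness is handled, whereas the paper leaves these points implicit.
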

A proof is given in Section \ref{proof:basis_pursuit_program_invariance}.
This has as an immediate corollary that we may replace $I_D$ in the constraint by any orthonormal $D \times D$ matrix.

We also claim that the conditions of the consequent of Proposition \ref{prop:orthonormal_basis} are satisfied by minimizers of the multitask basis pursuit objective applied to suitably normalized matrices in the special case where a rank $D$ orthonormal submatrix exists and $|S| = D$.
\begin{proposition}
Let $w_c$ be a normalization satisfying the conditions in Definition \ref{def:symmetric_normalization}.
Then $\arg \min_{X_{.S} \in \mathbb R^{D \times D}} \widehat \beta_c ( X_{.S}) $ is orthonormal and, given $X$ is orthonormal, $ \| \beta \|_{1,2} \; : \; I_D = w ({  X}, c) \beta = D$.
\label{prop:unitary_selection}
\end{proposition}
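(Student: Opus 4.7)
I would split the proposition into its two assertions and handle them in turn. Part (i), that any minimizer of the basis pursuit objective over square invertible $D \times D$ matrices is orthonormal, is the main content; part (ii), that the minimum value is exactly $D$ when $X$ is orthonormal, follows by direct computation.

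For part (ii) I proceed by direct calculation. When $X$ is orthonormal, each column satisfies $\|X_{.p}\|_2 = 1$, and the formula for $q_c$ gives $q_c(v) = 1$ whenever $\|v\|_2 = 1$, so $w(X,c) = X$. Because $X$ is square and invertible, the basis pursuit constraint $I_D = X\beta$ has the unique feasible point $\beta = X^{-1} = X^T$, whose rows are themselves unit vectors, yielding $\|\beta\|_{1,2} = \sum_{p=1}^{D} \|X^T_{p.}\|_2 = D$.

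For part (i) I would establish the lower bound $\|\widehat\beta_c(X_{.S})\|_{1,2} \geq D$ for every invertible $X_{.S} \in \mathbb R^{D \times D}$ with equality iff $X_{.S}$ is orthonormal. Since $w(X_{.S},c)$ is square and invertible, $\widehat\beta_c(X_{.S}) = w(X_{.S},c)^{-1}$. Write $y_p = q_c(X_{.p})X_{.p}$ for the $p$-th column of $w(X_{.S},c)$ and $\tilde y_p^T$ for the $p$-th row of its inverse, so the objective splits as $\sum_{p=1}^D \|\tilde y_p\|_2$. The dual-basis identities $\tilde y_p^T y_p = 1$ and $\tilde y_p \perp y_q$ for $q \neq p$, combined with Cauchy--Schwarz, yield
\begin{align*}
\|\tilde y_p\|_2 \;\geq\; \frac{1}{\|y_p\|_2},
\end{align*}
with equality iff $y_p$ is orthogonal to $\mathrm{span}\{y_q : q \neq p\}$. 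Invoking the symmetric normalization contributes $\|y_p\|_2 = \|n(X_{.p})\|_2 \leq 1$, with equality iff $\|X_{.p}\|_2 = 1$. Multiplying the two inequalities gives $\|\widehat\beta_c(X_{.S})\|_{1,2} \geq D$, and simultaneous equality forces every $X_{.p}$ to have unit length and the $X_{.p}$ (equivalently the $y_p$, since they are nonzero scalar multiples) to be pairwise orthogonal, so $X_{.S}$ is orthonormal by Proposition \ref{prop:orthonormal_basis}.

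\textbf{Main obstacle.} The load-bearing step is the normalization inequality $\|n(v)\|_2 \leq 1$ with equality iff $\|v\|_2 = 1$. This is the only place the precise form of the normalization enters, and it is what links the abstract requirements of Definition \ref{def:symmetric_normalization} to a usable lower bound on the basis pursuit objective; it must be extracted from the interplay of the three bullet points of that definition together with the explicit form of $q_c$ (convexity and the reciprocal symmetry $q(v) = q(v/\|v\|_2^2)$ together pin down that $\|n(v)\|_2$ peaks on the unit sphere with peak value $1$). The dual-basis Cauchy--Schwarz step, the reduction to $w(X_{.S},c)^{-1}$, and the direct computation at orthonormal $X$ are otherwise routine.
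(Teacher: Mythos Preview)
Your argument is correct. Both parts are handled cleanly, and the reduction to $\widehat\beta_c(X_{.S}) = w(X_{.S},c)^{-1}$ for square invertible $X_{.S}$ is the right starting point.

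The route differs from the paper's in the orthogonality step. The paper takes a QR decomposition $X_{.S} = QR$, invokes Proposition~\ref{prop:basis_pursuit_selection_invariance} (rotational invariance) to reduce $\|\widehat\beta\|_{1,2}$ to $\|R^{-1}\|_{1,2}$, and then bounds each row of $R^{-1}$ from below via its diagonal entry $|R_{dd}|^{-1} \geq \|X_{.S_d}\|_2^{-1}$, with equality forcing each column to be orthogonal to its predecessors. You instead use the dual-basis identity $\tilde y_p^{\,T} y_p = 1$ together with Cauchy--Schwarz to obtain $\|\tilde y_p\|_2 \geq 1/\|y_p\|_2$ directly, with equality iff $y_p$ is orthogonal to the remaining columns. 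Your approach is more elementary (no QR, no appeal to Proposition~\ref{prop:basis_pursuit_selection_invariance}) and column-symmetric; the paper's approach recycles the earlier invariance result and makes the Gram--Schmidt structure explicit. Both arguments rest on the same normalization property $\|n(v)\|_2 \leq 1$ with equality iff $\|v\|_2 = 1$; the paper, like you, takes this essentially as given from the definition of a symmetric normalization rather than deriving it in detail.
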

While this Proposition falls short of showing that an orthonormal submatrix will be selected should one be present, it provides intuition justifying the preferential efficacy of \isometrypursuit~ on real data.
A proof is given in Section \ref{sec:local_isometry_proof}.

\subsection{Two-stage isometry pursuit}

Since we cannot ensure either that $|\widehat {  S}| = D$ or that a orthonormal submatrix $X_{.S}$ exists, we first use the convex problem to prune and then apply brute search upon the substantially reduced feature set.

\begin{algorithm}[H]
\caption{\tsip(Matrix ${X} \in \mathbb{R}^{D \times P}$, scaling constant $c$)}
\begin{algorithmic}[1]
\STATE $\widehat{S}_{IP} = \text{\isometrypursuit}( X, c)$
\STATE $\widehat{S} = \text {\brute}({X}_{.\widehat{S}_{IP}}, l_c)$
\STATE {\bf Output} $\widehat{S}$
\end{algorithmic}
\end{algorithm}

Similar two-stage approaches are standard in the Lasso literature \cite{Hesterberg2008-iy}.
This method forms our practical isometry estimator, and is discussed further in Sections \ref{sec:discussion} and \ref{sec:deep_dive}.
\section{Experiments}
\label{sec:experiments}

Say you are hosting an elegant dinner party, and wish to select a balanced set of wines for drinking and flowers for decoration.
We demonstrate \tsip~ and \greedy~ on the Iris and Wine datasets \citep{misc_iris_53, misc_wine_109, scikit-learn}.
This has an intuitive interpretation as selecting diverse elements that reflects the peculiar structure of the diversification problem.
Features like \textit{ petal width} are rows in $X$.
They are features on the basis of which we may select among the flowers those which are most distinct from another.
Thus, in diversification, $P = n$.

We also analyze the Ethanol dataset from \citet{Chmiela2018-at, Koelle2022-ju}, but rather than selecting between bourbon and scotch we evaluate a dictionary of interpretable features  - bond torsions - for their ability to parameterize the molecular configuration space.
In this interpretability use case, columns denote gradients of informative features.
We compute Jacoban matrices of putative parametrization functions and project them onto estimated tangent spaces (see \citet{Koelle2022-ju} for preprocessing details).
Rather than selecting between data points, we are selecting between functions which parameterize the data.

For basis pursuit, we use the SCS interior point solver \citep{ocpb:16} from CVXPY \citep{diamond2016cvxpy, agrawal2018rewriting}, which is able to push sparse values arbitrarily close to 0 \citep{cvxpy_sparse_solution}.
Statistical replicas for Wine and Iris are created by resampling across $[P]$.
Due to differences in scales between rows, these are first standardized.
For the Wine dataset, even \brute~ on $\widehat {S}_{IP}$ is prohibitive in $D=13$, and so we truncate our inputs to $D=6$.
For Ethanol, replicas are created by sampling from data points and their corresponding tangent spaces are estimated in $B = 252$.

Figure \ref{fig:isometry_losses} and Table \ref{tab:experiments} show that the $l_1$ accrued by the subset $\widehat S_{G}$ estimated using \greedy~ with objective $l_1$ is higher than that for the subset estimated by \tsip.
This effect is statistically significant, but varies across datapoints and datasets.
Figure \ref{fig:support_cardinalities} details intermediate support recovery cardinalities from \isometrypursuit.
We also evaluated second stage \brute~ selection after random selection of $\widehat S_{IP}$ but do not report it since it often lead to catastrophic failure to satisfy the basis pursuit constraint.
Wall-clock runtimes are given in Section \ref{sec:timing}.

\begin{table}[h!]
\tiny
\centering
\begin{tabular}{|c|c|c|c|c|c|c|c|c|c|c|}
\toprule
Name & $D$ & $P$ & $R$ & $c$ & $l_1(X_{.\widehat{S}_{G}})$ & $|\widehat{S}_{IP}|$ & $l_1(X_{.\widehat{S}})$ & $\thead{\tiny P_R (l_1(X_{.\widehat{S}_{G}})  \\ > l_1(X_{.\widehat{S}_{}}))}$ & $ \thead{ \tiny P_R (l_1(X_{.\widehat{S}_{G}}) \\ = l_1(X_{.\widehat{S}_{}}))}$ & $\thead{ \tiny \widehat P(\bar{l}_1(X_{.\widehat{S}_{G}}) \\> \bar{l}_1(X_{.\widehat{S}_{}}))}$ \\
\midrule
Iris & 4 & 75 & 25 & 1 & 13.8 ± 7.3 & 7 ± 1 & 6.9 ± 1.4 & 0.96 & 0. & 2.4e-05 \\
Wine & 6 & 89 & 25 & 1 & 7.7 ± 0.3 & 13 ± 2 & 7.6 ± 0.3 & 0.64 & 0.16 & 6.3e-04 \\
Ethanol & 2 & 756 & 100 & 1 & 2.6 ± 0.3 & 90 ± 165 & 2.5 ± 0.2 & 0.66 & 0.17 & 2.1e-05 \\
\bottomrule
\end{tabular}
\caption{Experimental parameters and results.
For Iris and Wine, $P$ results from random downsampling by a factor of $2$ to create $R$ replicates.
$P_R$ values are empirical probabilities, while estimated P-values $\widehat P$ are computed by paired two-sample T-test on  $l_1(X_{.\widehat S})$ and $l_1(X_{.\widehat S_{G}})$.
For brevity, in this table $\widehat S := \widehat {S}_{TSIP}$.
}
\label{tab:experiments}
\end{table}

\begin{figure}[t]
    \centering
    \begin{subfigure}[b]{0.3\textwidth}
        \centering
        \includegraphics[width=\textwidth]{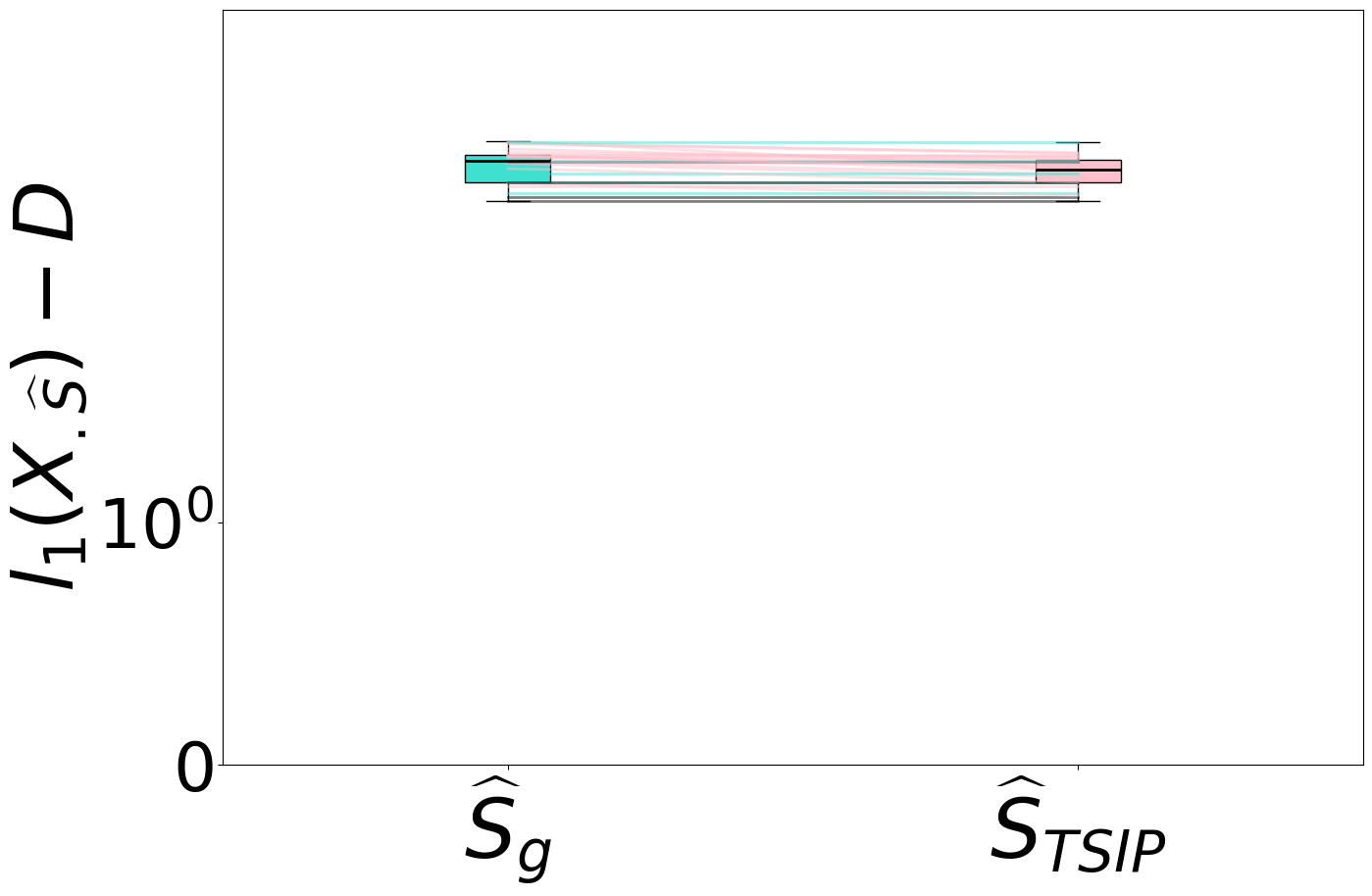}
        \caption{Wine dataset}
        \label{fig:wine_isometry_losses}
    \end{subfigure}
    \hfill
    \begin{subfigure}[b]{0.3\textwidth}
        \centering
        \includegraphics[width=\textwidth]{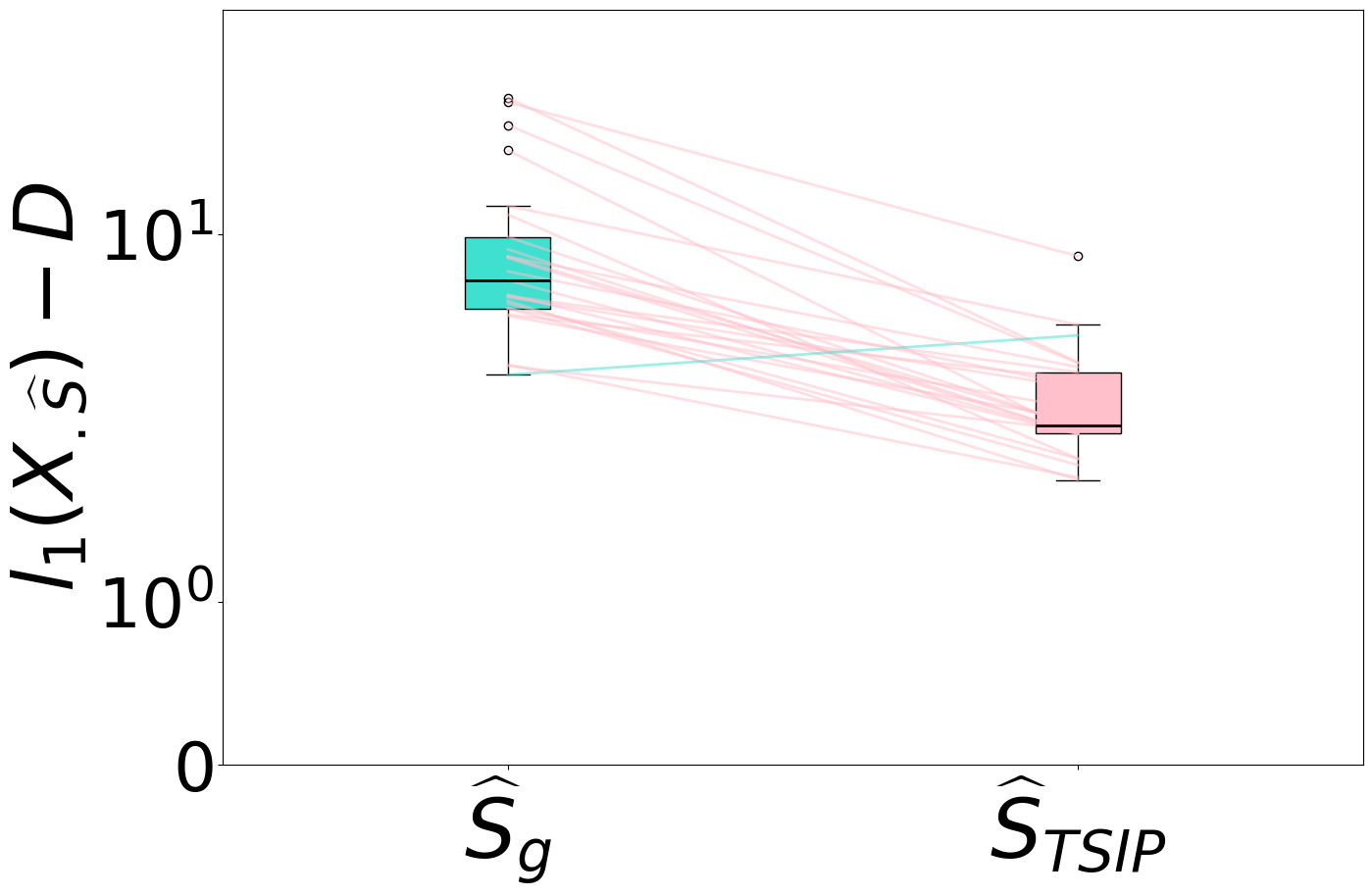}
        \caption{Iris dataset}
        \label{fig:iris_isometry_losses}
    \end{subfigure}
    \hfill
    \begin{subfigure}[b]{0.3\textwidth}
        \centering
        \includegraphics[width=\textwidth]{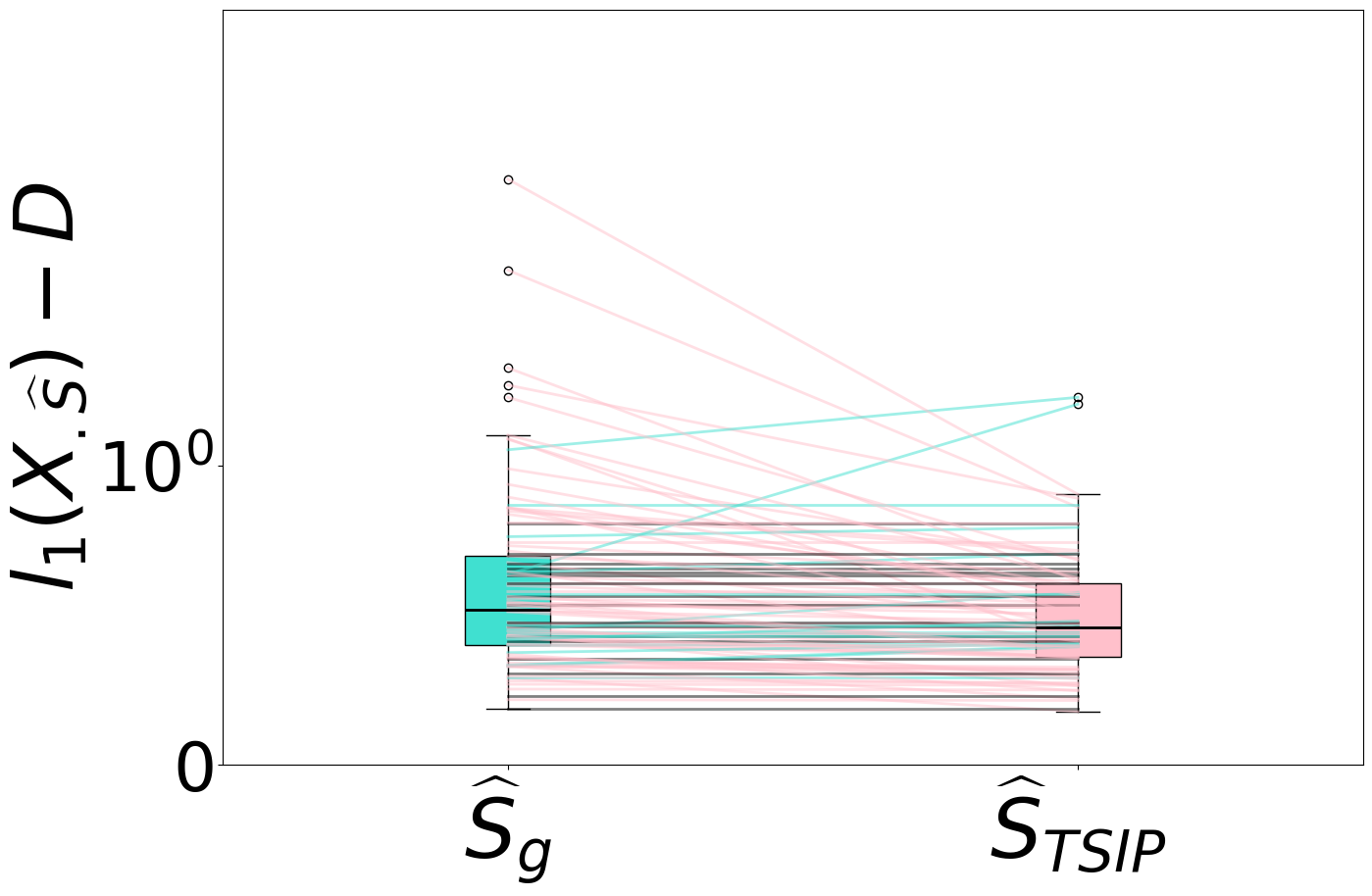}
        \caption{Ethanol dataset}
        \label{fig:ethanol_isometry_losses}
    \end{subfigure}
    \caption{Isometry losses $l_1$  for Wine, Iris, and Ethanol datasets across $R$ replicates.
    Lower brute losses are shown with turquoise, while lower two stage losses are shown with pink.
    Equal losses are shown with black lines.
    As detailed in Table \ref{tab:experiments}, losses are generally lower for two-stage isometry pursuit solutions.}
    \label{fig:isometry_losses}
\end{figure}

\section{Discussion}
\label{sec:discussion}

We have shown that multitask basis pursuit can help select isometric submatrices from appropriately normalized wide matrices.
This approach - isometry pursuit - is a convex alternative to greedy methods for selection of orthonormalized features from within a dictionary.
Isometry pursuit can be applied to diversification and geometrically-faithful coordinate estimation.
Our experiments exemplify these applications, but more can be done.
One potential application is diversification in recommendation systems \citep{Carbonell2017-gi, Wu2019-uk, Langchain} and other retrieval systems such as in RAG \citep{Gao2023-cn, Pickett2024-ad, In2024-um, Weiss2024-xm, Vectara}.
Another is decomposing interpretable yet overcomplete dictionaries in transformer residual streams, with each token considered as generating its own tangent space \citep{templeton2024scaling, Makelov2024-bw}.

Compared with the greedy algorithms used in such areas \citep{Carbonell1998-ji, Barioni, Drosou, Qin2012-ok, KUNAVER2017154, Guo-shengbo, Abdool,Yu2016AGA,  Huang2024-wr, Pickett2024-ad}, the convex reformulation may add speed and convergence to a global minima.
The comparison of greedy \cite{Mallat93-wi, Mallat, Pati-93, Tropp05-ml} and convex \citep{Chen2001-hh, Tropp06-sg,Chen2006TheoreticalRO} basis pursuit formulations has a rich history, and theoretical understanding of the behavior of this approximation is evolving.
Diversification problems have been cited as NP-hard, and isometry pursuit can be considered analogous to them in the sense of basis pursuit and the lasso against best subset selection, with the caveat that best subset selection of the basis pursuit loss minimizer isn't totally equivalent to isometry pursuit even though they share the same unique optimum.
Characterization of solutions resulting from removal of the restriction $P = D$ on the conditions of Proposition \ref{prop:unitary_selection} may help justify the second selection step.
That the solution of a lasso problem can sometimes be a non-singleton set is well-known \citep{Osborne2000OnTL, DOSSAL2012117, Chrtien2011OnTG, Tibshirani2012TheLP, Ewald2017OnTD, Ali2018TheGL, Schneider2020-qt, Mishkin2022TheSP,Dupuis2019TheGO,Debarre2020OnTU,Everink2024TheGA}.
Perhaps surprisingly, it appears empirically that for isometry pursuit that this can occur even when the design matrix is not in general position.

This convex set appears to contain the sparsest solution. 
The convergence of SCS algorithm to the 2-norm minimizing solution due to the Lagrangian dual constraint penalty and the convexity of the loss minimizer preimage suggest that a related two stage procedure always succeeds in identifying the brute $\|\|_{1,2}$ minimizer.
Related conditions have been discussed in \citet{Donoho2006ForML, Mishkin2022TheSP}, and we examine this topic experimentally in Section \ref{sec:deep_dive}.

Algorithmic variants include the multitask lasso \citep{ Hastie2015-qa} extension of our estimator, as well as characterization of $D$ function selection within $\mathbb R^B$.
Tangent-space specific variants have been studied in more detail in \citet{Koelle2022-ju, Koelle2024-no} with additional grouping across datapoints, and a corresponding variant of the isometry theorem that missed non-uniqueness was claimed in \citet{Koelle2022-lp}.
Comparison of our loss with curvature - whose presence prohibits $D$ element isometry - could prove fertile, as could comparison with the so-called restricted isometry property used to show guaranteed recovery at fast convergence rates in supervised learning \citep{Candes2005-dd, Hastie2015-qa}.

\begin{acknowledgments}
M.M. gratefully acknowledges the DataShape Group at INRIA Saclay and the Institute for Mathematical and Statistical Innovation (IMSI) for hospitality while a portion this research was carried out.
\end{acknowledgments}

\newpage

\bibliography{ref}

\newpage

\section{Supplement}

This section contains algorithms, proofs, and experiments in support of the main text.

\subsection{Algorithms}
\label{sec:algorithms}

We give definitions of the brute and greedy algorithms for the combinatorial problem studied in this paper.
The brute force algorithm is computationally intractable for all but the smallest problems, but always finds the global minima.

\begin{algorithm}[H]
\caption{\brute(Matrix ${X} \in \mathbb{R}^{D \times P}$, objective $f$)}
\begin{algorithmic}[1]
\FOR{each combination $S \subseteq \{1, 2, \dots, P\}$ with $|S| = D$}
    \STATE Evaluate $f({X}_{.S})$
\ENDFOR
\STATE {\bf Output} the combination $S^*$ that minimizes $f({X}_{.S})$
\end{algorithmic}
\end{algorithm}

Greedy algorithms are computationally expedient but can get stuck in local optima \citep{Cormen, Russell-09}, even with randomized restarts \citep{Dick2014HowMR}.

\begin{algorithm}[H]
\caption{\greedy(Matrix ${X} \in \mathbb{R}^{D \times P}$, objective $f$, selected set $S = \emptyset$, current size $d=0$)}
\begin{algorithmic}[1]
\IF{$d = D$}
    \STATE {\bf Return} $S$
\ELSE
    \STATE {\bf Initialize} $S_{\text{best}} = S$
    \STATE {\bf Initialize} $f_{\text{best}} = \infty$
    \FOR{each $p \in \{1, 2, \dots, P\} \setminus S$}
        \STATE {\bf Evaluate} $f({X}_{.(S \cup \{p\})})$
        \IF{$f({X}_{.(S \cup \{p\})}) < f_{\text{best}}$}
            \STATE {\bf Update} $S_{\text{best}} = S \cup \{p\}$
            \STATE {\bf Update} $f_{\text{best}} = f(\mathcal{X}_{.(S \cup \{p\})})$
        \ENDIF
    \ENDFOR
    \STATE {\bf Return} \greedy(${X}$, $f$, $S_{\text{best}}$, $d+1$)
\ENDIF
\end{algorithmic}
\end{algorithm}

\newpage

\subsection{Proofs}
\label{sec:proofs}

\subsubsection{Proof of Proposition \ref{prop:basis_pursuit_selection_invariance}}
\label{proof:basis_pursuit_program_invariance}

In this proof we first show that the penalty $\|\beta\|_{1,2}$ is unchanged by unitary transformation of $\beta$.

 \begin{proposition}
 \label{prop:basis_pursuit_loss_equivalence}
 Let $U \in \mathbb R^{D \times D}$ be unitary.
 Then $\|\beta\|_{1,2} = \|\beta U \|$.
\end{proposition}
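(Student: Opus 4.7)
The plan is to unpack the definition of $\|\cdot\|_{1,2}$ and reduce the claim to the rotation-invariance of the Euclidean norm applied row by row.

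First I would write out $\|\beta U\|_{1,2} = \sum_{p=1}^P \|(\beta U)_{p.}\|_2$ using the definition in equation \eqref{eq:bp}. Next I would observe that matrix multiplication acts row-wise on the left factor, so $(\beta U)_{p.} = \beta_{p.} U$, where $\beta_{p.}$ is viewed as a row vector in $\mathbb{R}^D$.

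The key step is then to show $\|\beta_{p.} U\|_2 = \|\beta_{p.}\|_2$ for each $p$. This follows from a one-line calculation: $\|\beta_{p.} U\|_2^2 = \beta_{p.} U U^T \beta_{p.}^T = \beta_{p.} \beta_{p.}^T = \|\beta_{p.}\|_2^2$, where the middle equality uses $U U^T = I_D$ since $U$ is unitary. Summing over $p \in [P]$ gives $\|\beta U\|_{1,2} = \|\beta\|_{1,2}$.

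The hard part is essentially nonexistent here; this is a routine verification that the grouped $(1,2)$-norm inherits rotation invariance from the Euclidean norm on each row. The only subtlety worth flagging is the direction of the action: $U$ acts on the right and thus mixes the $D$ coordinates within each row group (the same group structure used in the multitask penalty), which is precisely why orthonormality on the $D$-dimensional side suffices to preserve the penalty and no assumption on $P$ is needed.
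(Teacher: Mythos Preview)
Your proposal is correct and follows essentially the same approach as the paper: expand $\|\beta U\|_{1,2}$ row by row, use the rotation invariance of the Euclidean norm on each row, and sum. The only difference is that you explicitly justify $\|\beta_{p.}U\|_2=\|\beta_{p.}\|_2$ via $UU^T=I_D$, whereas the paper simply asserts that step.
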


\begin{proof}
\begin{align}
\|\beta U \|_{1,2} &= \sum_{p = 1}^P \| \beta_{p.} U \| \\
&= \sum_{p = 1}^P \| \beta_{p.} \| \\
&= \|\beta \|_{1,2}
\end{align}
\end{proof}

We then show that this implies that the resultant loss is unchanged by unitary transformation of $ X$.

\begin{proposition}
 \label{prop:basis_pursuit_loss_equivalence}
 Let $U \in \mathbb R^{D \times D}$ be unitary.
 Then $\widehat \beta  (U  X) = \widehat \beta  (  X) U$.
\end{proposition}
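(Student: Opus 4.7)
The plan is to transfer the unitary action from $X$ to $\beta$ inside the basis pursuit program defining $\widehat\beta$ and then invoke the just-proved norm invariance.

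The key preliminary observation is that the column-wise normalization is left-equivariant under a unitary transformation: each column $X_{.p}$ is rescaled by $q_c(\|X_{.p}\|_2)$, which depends only on the Euclidean norm of the column, and since $\|UX_{.p}\|_2 = \|X_{.p}\|_2$ for unitary $U$, the scaling factors are unchanged. This gives $w(UX,c) = U\,w(X,c)$ column by column.

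With this in hand, I would rewrite the feasibility constraint of $\widehat\beta(UX) = \arg\min \|\beta\|_{1,2} : I_D = w(UX,c)\beta$ as $I_D = U w(X,c)\beta$, and then multiply both sides by $U^T = U^{-1}$ to obtain the equivalent constraint $w(X,c)\beta = U^T$. Applying the change of variables $\beta = \gamma U^T$ turns this into $w(X,c)\gamma = I_D$, which is exactly the constraint defining $\widehat\beta(X)$. By the preceding Proposition \ref{prop:basis_pursuit_loss_equivalence}, $\|\beta\|_{1,2} = \|\gamma U^T\|_{1,2} = \|\gamma\|_{1,2}$, so the bijection $\gamma \leftrightarrow \beta$ between feasible sets preserves the objective. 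Consequently the argmins correspond: every minimizer of the transformed program is obtained from a minimizer of the original by right multiplication by the appropriate unitary matrix, which is the content of the statement (after the standard $U \leftrightarrow U^T$ identification within the group of unitaries).

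I do not expect a serious obstacle; the proof amounts to careful bookkeeping of transposes once the two ingredients (unitary equivariance of $w$ and right-unitary invariance of $\|\cdot\|_{1,2}$) are in place. The only minor subtleties are keeping the shapes $\beta,\gamma\in\mathbb{R}^{P\times D}$ straight so that $U$ and $U^T$ act on the correct sides, and accommodating the fact that $\widehat\beta$ is in general set-valued — but because the change of variables is a bijection of the full feasible set, the argument automatically yields an equality of solution sets rather than just of individual minimizers.
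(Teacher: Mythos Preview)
Your proof is correct and follows the same route as the paper: shift $U$ from the left of $X$ to the right of $\beta$ in the constraint, then invoke the right-unitary invariance of $\|\cdot\|_{1,2}$ and change variables. You are in fact more careful than the paper on two points it glosses over---the equivariance $w(UX,c)=U\,w(X,c)$ of the column normalization, and the $U$ versus $U^{T}$ orientation (the change of variables actually yields $\widehat\beta(UX)=\widehat\beta(X)\,U^{T}$, which is what you obtain and which is all that is needed for the support statement in Proposition~\ref{prop:basis_pursuit_selection_invariance}).
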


\begin{proof}
\begin{align}
\widehat \beta  (U  X)  &= \arg \min_{\beta \in \mathbb R^{P \times D}} \|\beta\|_{1,2}  \; : \; I_{D} = U X \beta \\
&= \arg \min_{\beta \in \mathbb R^{P \times D}} \|\beta\|_{1,2}  \; : \; U^{-1} U = U^{-1} U X \beta U \\
&= \arg \min_{\beta \in \mathbb R^{P \times D}} \|\beta\|_{1,2}  \; : \;  I_D = X \beta U \\
&= \arg \min_{\beta \in \mathbb R^{P \times D}} \|\beta U \|_{1,2}  \; : \;  I_D = X \beta U \\
&= \arg \min_{\beta \in \mathbb R^{P \times D}} \|\beta \|_{1,2}  \; : \;  I_D = X \beta.
\end{align}
\end{proof}

\subsubsection{Proof of Proposition \ref{prop:unitary_selection}}
\label{sec:local_isometry_proof}

 \begin{proposition}
\label{prop:generalized_unitary_selection}
Let $w_c$ be a normalization satisfying the conditions in Definition \ref{def:symmetric_normalization}.
Then $\arg \min_{X_{.S} \in \mathbb R^{D \times D}} \widehat \beta_c ( X_{.S}) $ is orthonormal and, given $X$ is orthonormal, $ \| \beta \|_{1,2} \; : \; I_D = w ({  X}, c) \beta = D$.
 \end{proposition}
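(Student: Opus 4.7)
The plan is to treat the two assertions of Proposition~\ref{prop:generalized_unitary_selection} separately and to dispatch the value computation at orthonormal $X$ first. The rotation-invariance and length-only-dependence axioms of Definition~\ref{def:symmetric_normalization} let me write $q(v) = \phi(\|v\|_2)$ with $\phi$ attaining its maximum (normalized to $1$) exactly at $t = 1$, so when $X$ is orthonormal each column satisfies $q(X_{.p}) = 1$ and hence $w(X,c) = X$. The feasibility constraint $I_D = X\beta$ then has the unique solution $\beta = X^{-1} = X^T$, whose rows are the columns of $X$, each of unit $\ell_2$ norm, so $\|\beta\|_{1,2} = D$.

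For the optimality assertion my approach is Cauchy-Schwarz applied to the identity $\beta Y = I_D$, where $Y := w(X,c)$ and $\beta = Y^{-1}$ is forced by the square-invertible feasibility constraint. Reading this identity on the diagonal gives $\langle \beta_{p.}, Y_{.p}\rangle = 1$ for every $p$, hence $\|\beta_{p.}\|_2 \geq 1/\|Y_{.p}\|_2$; summing over $p$ reduces the claim $\|\beta\|_{1,2} \geq D$ to the column-norm bound $\|Y_{.p}\|_2 = q(X_{.p})\|X_{.p}\|_2 \leq 1$ with equality iff $\|X_{.p}\|_2 = 1$, which is the ``length at unit length is longest'' property of $n$ asserted in Section~\ref{sec:normalization}. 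Granting that bound, equality in $\|\beta\|_{1,2} \geq D$ forces simultaneous saturation of Cauchy-Schwarz ($\beta_{p.} \propto Y_{.p}^T$) and of the normalization bound ($\|X_{.p}\|_2 = 1$, so $Y = X$); substituting the parallelism into the off-diagonal relations $\langle \beta_{p.}, Y_{.q}\rangle = 0$ for $q \neq p$ then gives $\langle X_{.p}, X_{.q}\rangle = 0$, so the columns of $X$ are mutually orthogonal and of unit length, i.e.\ $X$ is orthonormal.

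The main obstacle I expect is the column-norm bound $q(v)\|v\|_2 \leq 1$ used above. It encodes the design intent ``vectors of length $1$ have longest length after normalization'' stated in Section~\ref{sec:normalization}, but it is not a literal consequence of the three axioms of Definition~\ref{def:symmetric_normalization}: the inversion axiom $q(v) = q(v/\|v\|_2^2)$ only yields the $t \leftrightarrow 1/t$ symmetry of $q$ itself, which does not automatically transfer to the product $t\,q(t)$. To close this gap I would either elevate the ``longest length'' property to an explicit fourth axiom or verify $t\,q_c(t) \leq 1$ directly for the specific $q_c$ in (\ref{eq:normalization}) via a short one-variable calculus check on $[1,\infty)$ and then extend to $(0,1]$ using the symmetry.
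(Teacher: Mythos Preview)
Your argument is correct and genuinely different from the paper's. The paper establishes the lower bound $\|\beta\|_{1,2}\ge D$ by first reducing to unit-length columns and then invoking the QR factorisation $X_{.S}=QR$: rotation invariance (Proposition~\ref{prop:basis_pursuit_selection_invariance}) gives $\|\beta\|_{1,2}=\|R^{-1}\|_{1,2}$, and since $R$ is upper triangular with $|R_{dd}|\le\|X_{.S_d}\|_2$, the diagonal of $R^{-1}$ already forces each row norm to be at least $1$. You bypass QR entirely by reading the diagonal of $\beta Y=I_D$ and applying Cauchy--Schwarz, which is more elementary and yields the equality characterisation more transparently: your saturation analysis (parallelism plus the off-diagonal relations) pins down orthonormality directly, whereas the paper leans on the Gram--Schmidt fact that $|R_{dd}|=\|X_{.S_d}\|_2$ for all $d$ only when the columns are orthogonal.

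Your flagged gap---that $\|n(v)\|_2=q(v)\|v\|_2\le 1$ does not literally follow from the three axioms of Definition~\ref{def:symmetric_normalization}---is well taken, and the paper's proof relies on exactly the same unproven bound (it asserts ``normalized matrices will consist of vectors of maximum length (i.e.\ $1$) if and only if the original matrix also consists of vectors of length $1$'' without deriving it from the definition). So your proposal is at least as rigorous as the paper's own argument, and your suggested fixes (adding it as an axiom, or checking it for the concrete $q_c$) are the right way to close it.
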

 
 \begin{proof}

The value of $D$ is clearly obtained by $\beta$ orthonormal, since by Proposition \ref{prop:basis_pursuit_selection_invariance}, for $X$ orthogonal, without loss of generality 
\begin{align}
\beta_{dd'} = \begin{cases} 1 & d = d' \in \{ 1 \dots D\}  \\
0 & \text{otherwise}
\end{cases}.
\end{align}
Thus, we need to show that this is a lower bound on the obtained loss.

From the conditions in Definition \ref{def:symmetric_normalization}, normalized matrices will consist of vectors of maximum length (i.e. $1$) if and only if the original matrix also consists of vectors of length $1$.
Such vectors will clearly result in lower basis pursuit loss, since longer vectors in $X$ require smaller corresponding covectors in $\beta$ to equal the same result.

Therefore, it remains to show that $X$ consisting of orthogonal vectors of length $1$ have lower loss compared with $X$ consisting of non-orthogonal vectors.
Invertible matrices $X_{.S}$ admit QR decompositions $\tilde X_{.S} = QR$ where $Q$ and $R$ are orthonormal and upper-triangular matrices, respectively \citep{Anderson1992-fb}.
Denoting $Q$ to be composed of basis vectors $[e_1 \dots e_D]$, the matrix $R$ has form
\begin{align}
R = \begin{bmatrix}
\langle e_1, X_{.S_1} \rangle & \langle e_1,  X_{.S_2} \rangle  &\dots &  \langle e_1,  X_{.S_D} \rangle \\
0 & \langle e_d,  X_{.S_2} \rangle & \dots  &  \langle e_2,  X_{.S_D} \rangle\\
0 & 0 & \dots & \dots  \\
\dots & \dots & \dots & \dots \\
0 & 0 & \dots & \langle e_D, X_{.S_D} \rangle 
\end{bmatrix}.
\end{align}
Thus, $|R_{dd} | \leq \|X_{.{S_{d}}}\|_2$ for all $d \in [D]$, with equality obtained only by orthonormal matrices.
On the other hand, by Proposition \ref{prop:basis_pursuit_selection_invariance}, $l_c(X) = l_c(R)$ and so $\|\beta\|_{1,2} = \|R^{-1}\|_{1,2}$.
Since $R$ is upper triangular it has diagonal elements $\beta_{dd} = R_{dd}^{-1}$ and so $\|\beta_{d.}\| \geq \| X_{.{S_d}}\|^{-1} = 1$.
That is, the penalty accrued by a particular covector in $\beta$ is bounded from below by $1$ - the inverse of the length of the corresponding vector in $X_{.S}$ - with equality occurring only when $X_{.S}$ is orthonormal.
\end{proof}

\newpage

\subsection{Support cardinalities}
\label{sec:support_cardinalities}

Figure \ref{fig:support_cardinalities} plots the distribution of $|\widehat{S}_{IP}|$ from Table \ref{tab:experiments} in order to contextualize the reported means.
While typically $|\widehat{S}_{IP}| << P$, there are cases for Ethanol where this is not the case that drive up the means.

\begin{figure}[t]
    \centering
    \begin{subfigure}[b]{0.3\textwidth}
        \centering
        \includegraphics[width=\textwidth]{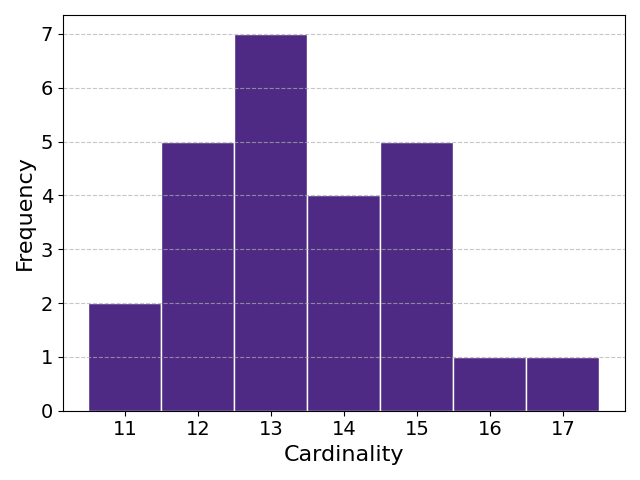}
        \caption{Wine Dataset}
        \label{fig:wine_cardinalities}
    \end{subfigure}
    \hfill
    \begin{subfigure}[b]{0.3\textwidth}
        \centering
        \includegraphics[width=\textwidth]{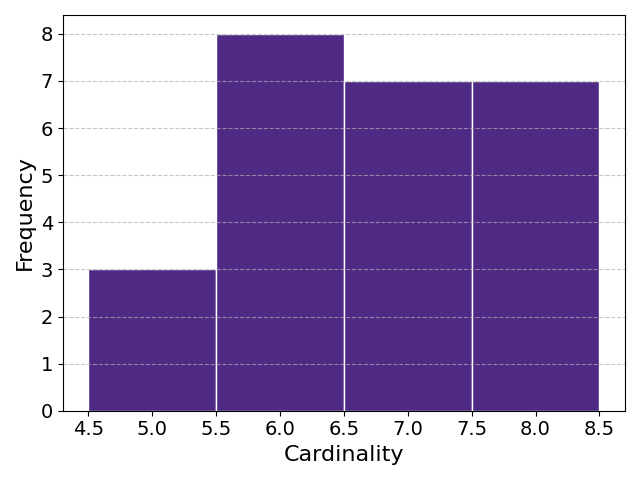}
        \caption{Iris Dataset}
        \label{fig:iris_cardinalities}
    \end{subfigure}
    \hfill
    \begin{subfigure}[b]{0.3\textwidth}
        \centering
        \includegraphics[width=\textwidth]{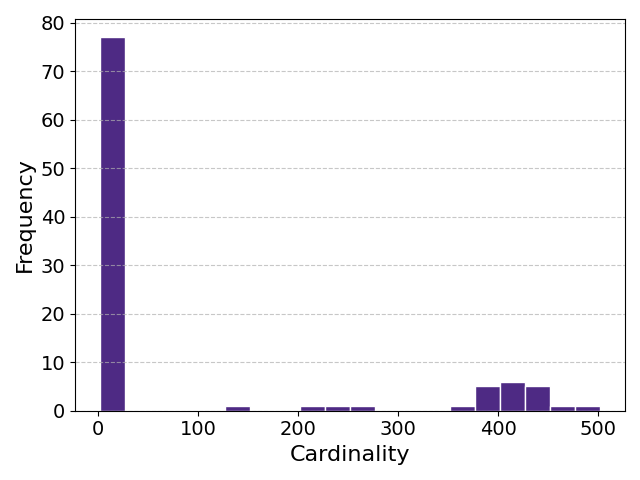}
        \caption{Ethanol Dataset}
        \label{fig:ethanol_cardinalities}
    \end{subfigure}
    \caption{Support Cardinalities for Wine, Iris, and Ethanol datasets}
    \label{fig:support_cardinalities}
\end{figure}

\newpage

\subsection{Proposition \ref{prop:unitary_selection} deep dive}
\label{sec:deep_dive}

As mentioned in Section \ref{sec:discussion}, the conditions under which the restriction $P=D$ in Proposition \ref{prop:unitary_selection} may be relaxed are of theoretical and practical interest.
The results in Section \ref{sec:experiments} show that there are circumstances in which the \greedy~ performs better than \tsip, so clearly \tsip~ does not always achieve a global optimum.
Figure \ref{fig:comparison_losses} gives results on the line of inquiry about why this is the case based on the reasoning presented in Section \ref{sec:discussion}.
In these results a two-stage algorithm achieves the global optimum of a slightly different brute problem, namely brute optimization of the multitask basis pursuit penalty $\|\cdot \|_{1,2}$.
That is, brute search on $\|\cdot \|_{1,2}$ gives the same result as the two stage algorithm with brute search on $\|\cdot \|_{1,2}$ subsequent to isometry pursuit.
This suggests that failure to select the global optimum by \tsip~ is in fact only due to the mismatch between global optimums of brute optimization of the multitask penalty and the isometry loss given certain data.
Theoretical formalization, as well as investigation of what data configurations this equivalence holds for, is a logical follow-up.

\begin{figure}[t] 
    \centering
    \begin{subfigure}[b]{0.45\textwidth}
        \centering
        \includegraphics[width=\textwidth]{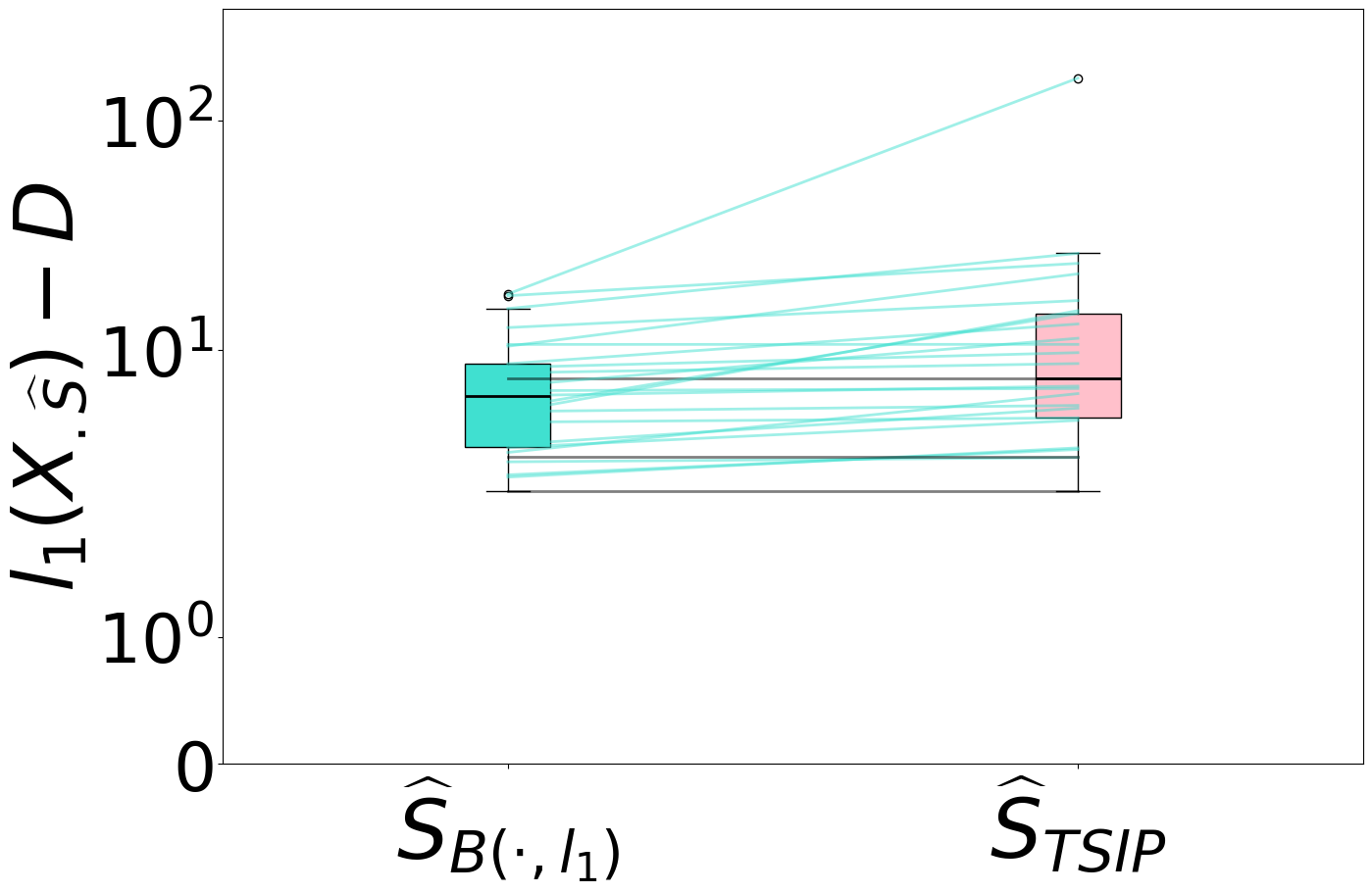}
        \caption{Iris Isometry Losses}
        \label{fig:iris_isometry_losses}
    \end{subfigure}
    \hfill
    \begin{subfigure}[b]{0.45\textwidth}
        \centering
        \includegraphics[width=\textwidth]{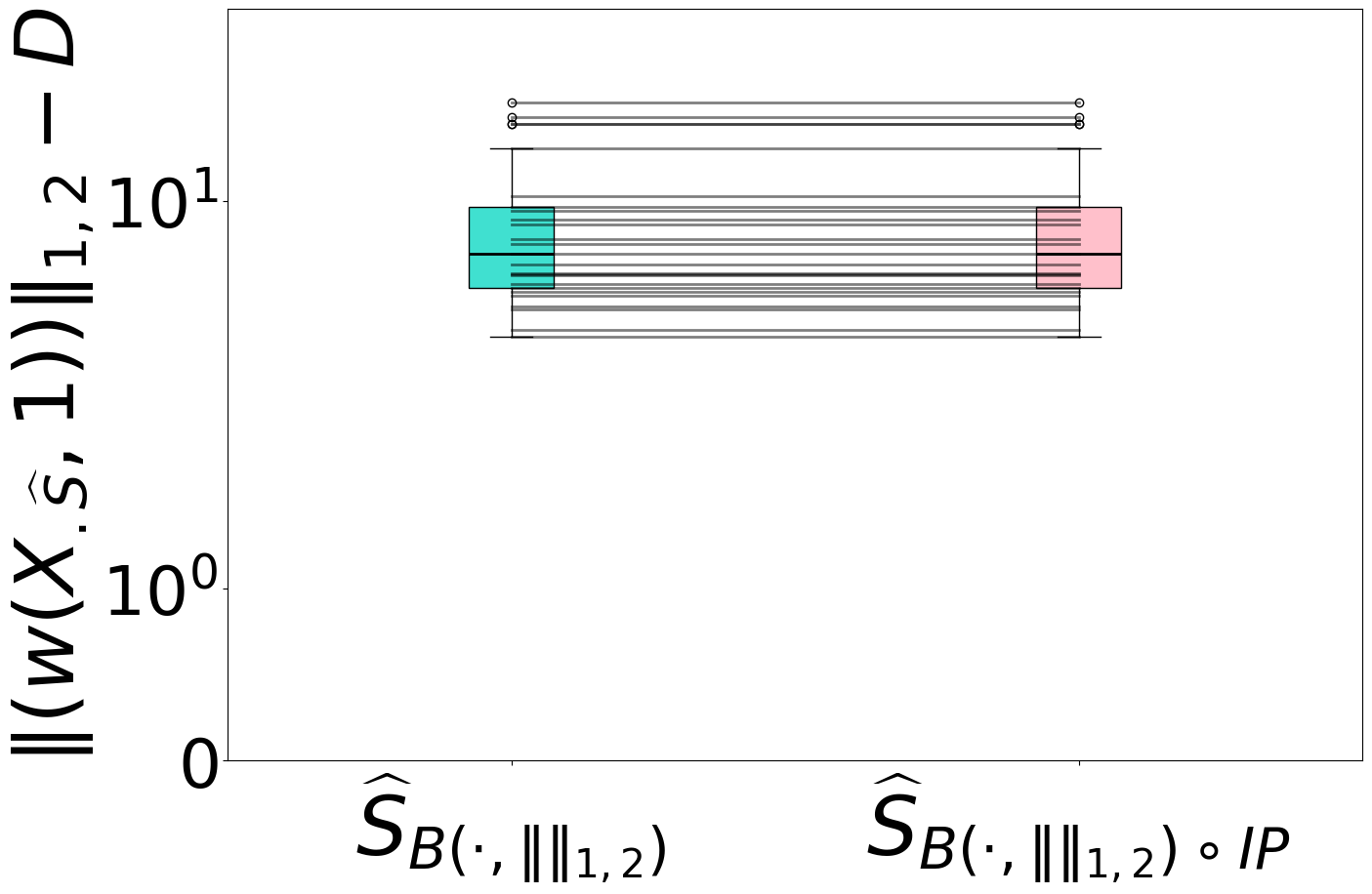}
        \caption{Iris Multitask Losses}
        \label{fig:iris_group_lasso_losses}
    \end{subfigure}

    \vspace{0.5cm} 

    \begin{subfigure}[b]{0.45\textwidth}
        \centering
        \includegraphics[width=\textwidth]{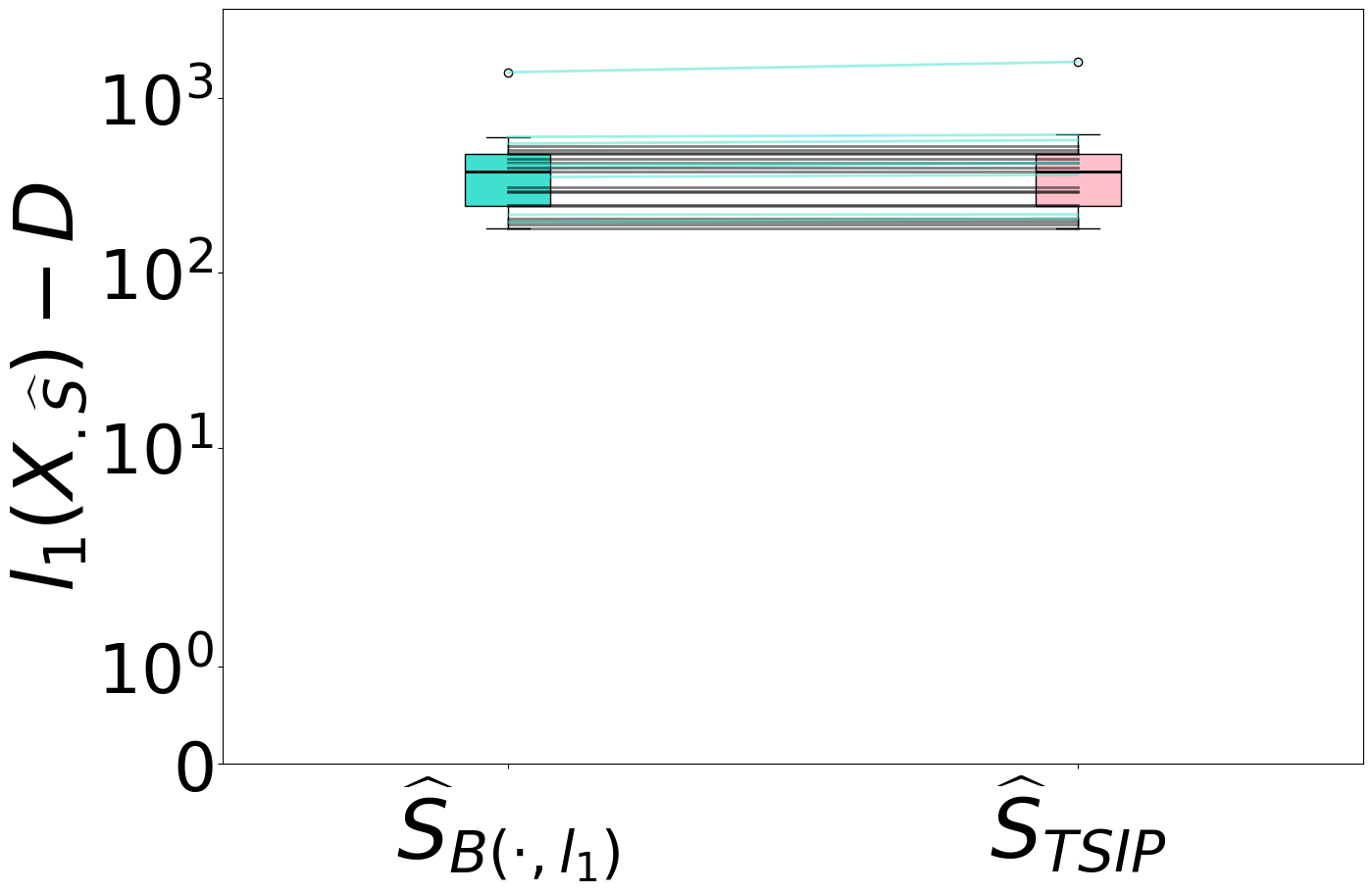}
        \caption{Wine Isometry Losses}
        \label{fig:wine_isometry_losses}
    \end{subfigure}
    \hfill
    \begin{subfigure}[b]{0.45\textwidth}
        \centering
        \includegraphics[width=\textwidth]{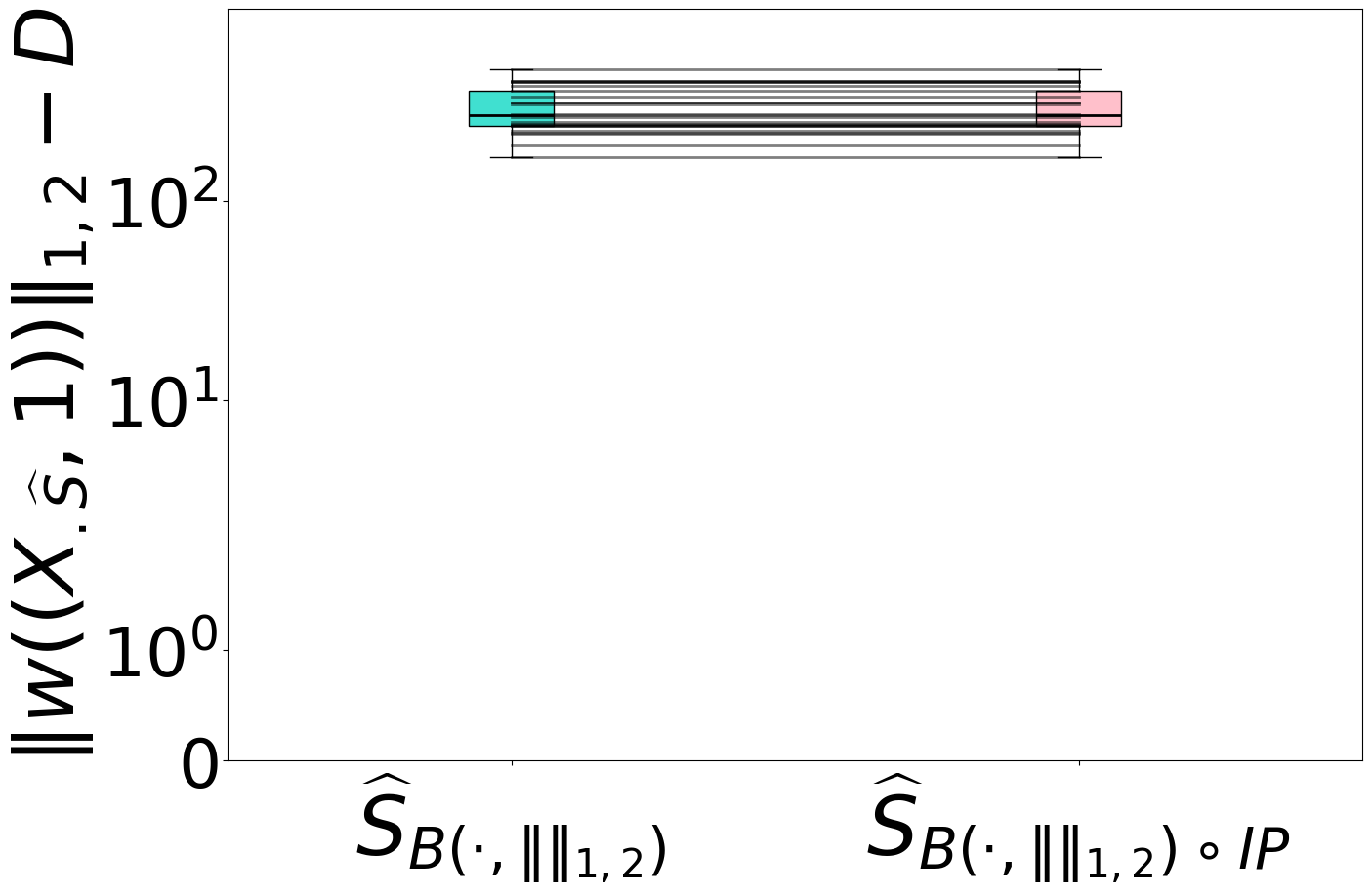}
        \caption{Wine Multitask Losses}
        \label{fig:wine_group_lasso_losses}
    \end{subfigure}
    \caption{Comparison of Isometry and Group Lasso Losses across $25$ replicates for randomly downsampled Iris and Wine Datasets with $(P,D) = (4,15)$ and $(13, 18)$, respectively.
    Note that this further downsampling compared with Section \ref{sec:experiments} was necessary to compute global minimizers of \brute.
    Lower brute losses are shown with turquoise, while lower two stage losses are shown with pink.
    Equal losses are shown with black lines.}
    \label{fig:comparison_losses}
\end{figure}

\newpage

\subsection{Timing}
\label{sec:timing}

While wall-time of algorithms is a non-theoretical quantity that depends on implementation details, it provides valuable context for practitioners.
We therefore report the following runtimes on a 2021 Macbook Pro.
The particularly high variance for brute force search in the second step of \tsip~ is likely due to the large cardinalities reported in Figure \ref{fig:support_cardinalities}.

\begin{table}[H]
\centering
\begin{tabular}{|c|c|c|c|}
\toprule
Name & IP & 2nd stage brute & Greedy \\
\midrule
Iris & 1.24 ± 0.02 & 0.00 ± 0.00 & 0.02 ± 0.00 \\
Wine & 2.32 ± 0.17 & 0.13 ± 0.12 & 0.03 ± 0.00 \\
Ethanol & 8.38 ± 0.57 & 0.55 ± 1.08 & 0.07 ± 0.01 \\
\bottomrule
\end{tabular}
\caption{Algorithm runtimes in seconds across replicates.}
\end{table}

\end{document}